\title{\LARGE \bf
db-CBS: Discontinuity-Bounded Conflict-Based Search for Multi-Robot Kinodynamic Motion Planning
}
\author{Akmaral Moldagalieva, Joaquim Ortiz-Haro, Marc Toussaint, and Wolfgang Hönig%
\thanks{All authors are with Technical University of Berlin, Berlin, Germany
        {\tt\footnotesize moldagalieva@tu-berlin.de}.}%
\thanks{Code: \url{https://github.com/IMRCLab/db-CBS}}
\thanks{Video: \url{https://youtu.be/1mglNQOmOLE}}
}
\newtheorem{theorem}{Theorem}
\newtheorem{definition}{Definition}
\crefname{equation}{}{} %
\newcommand*{\tikzmk}[1]{\tikz[remember picture,overlay,] \node (#1) {};\ignorespaces}
\newcommand{\marklineSix}[1]{\tikz[remember picture,overlay]{\node[yshift=2pt,xshift=#1,fill=yellow!100,opacity=.25,fit={(A)($(A)+(0.95\linewidth,-5.3\baselineskip)$)},rounded corners=4pt] {};}\ignorespaces}
\newcommand{\marklineFour}[1]{\tikz[remember picture,overlay]{\node[yshift=2pt,xshift=#1,fill=yellow!100,opacity=.25,fit={(A)($(A)+(0.95\linewidth,-3.3\baselineskip)$)},rounded corners=4pt] {};}\ignorespaces}
\newcommand{\marklineThree}[1]{\tikz[remember picture,overlay]{\node[yshift=2pt,xshift=#1,fill=yellow!100,opacity=.25,fit={(A)($(A)+(0.95\linewidth,-2.3\baselineskip)$)},rounded corners=4pt] {};}\ignorespaces}
\newcommand{\marklineTwo}[1]{\tikz[remember picture,overlay]{\node[yshift=2pt,xshift=#1,fill=yellow!100,opacity=.25,fit={(A)($(A)+(0.95\linewidth,-1.3\baselineskip)$)},rounded corners=4pt] {};}\ignorespaces}
\newcommand{\marklineOne}[1]{\tikz[remember picture,overlay]{\node[yshift=2pt,xshift=#1,fill=yellow!100,opacity=.25,fit={(A)($(A)+(0.95\linewidth,-0.3\baselineskip)$)},rounded corners=4pt] {};}\ignorespaces}
\newcommand{\vx}{\mathbf{x}}    %
\newcommand{\vu}{\mathbf{u}}    %
\newcommand{\vg}{\mathbf{g}}    %
\newcommand{\seqX}{\mathbf{X}}    %
\newcommand{\seqU}{\mathbf{U}}    %
\newcommand{\sX}{\mathcal{X}}   %
\newcommand{\sU}{\mathcal{U}}   %
\newcommand{\sW}{\mathcal{W}}   %
\newcommand{\sM}{\mathcal{M}}   %
\newcommand{\sR}{\mathcal{R}}  %
\newcommand{\sB}{\mathcal{B}} %
\newcommand{\sC}{\mathcal{C}}   %
\newcommand{\sO}{\mathcal{O}}   %
\newcommand{\vf}{\mathbf{f}}    %
\DeclareMathOperator{\step}{step}
\begin{document}

\maketitle
\thispagestyle{empty}
\pagestyle{empty}

\begin{abstract}
This paper presents a multi-robot kinodynamic motion planner that enables a team of robots with different dynamics, actuation limits, and shapes to reach their goals in challenging environments.
We solve this problem by combining Conflict-Based Search (CBS), a multi-agent path finding method, and discontinuity-bounded A*, a single-robot kinodynamic motion planner. Our method, db-CBS, operates in three levels.
Initially, we compute trajectories for individual robots using a graph search that allows bounded discontinuities between precomputed motion primitives. The second level identifies inter-robot collisions and resolves them by imposing constraints on the first level. 
The third and final level uses the resulting solution with discontinuities as an initial guess for a joint space trajectory optimization. The procedure is repeated with a reduced discontinuity bound.
Our approach is anytime, probabilistically complete, asymptotically optimal, and finds near-optimal solutions quickly. Experimental results with robot dynamics such as unicycle, double integrator, and car with trailer in different settings show that our method is capable of solving challenging tasks with a higher success rate and lower cost than the existing state-of-the-art.

\end{abstract}

\section{Introduction}
Multi-robot systems have a wide-range of real-world applications including delivery, collaborative transportation, and search-and-rescue. One of the essential requirements to enhance the autonomy of a team of robots in these settings is being able to reach the goal quickly while avoiding collisions with obstacles and other robots. Moreover, the planned motions are required to respect the robots' dynamics which impose constraints on their velocity or acceleration. 
Considering the complexity of Multi-Robot Motion Planning (MRMP), the majority of existing solutions either make simplified assumptions like ignoring the robot dynamics or actuation limits, produce highly suboptimal solutions, or do not scale well with the number of robots.

In this paper, we present an efficient, probabilistically-complete and asymptotically optimal motion planner for a heterogeneous team of robots which takes into account constraints imposed by robot dynamics. Our method leverages the Multi-Agent Path Finding (MAPF) optimal solver Conflict-Based Search (CBS), the single-robot kinodynamic motion planner discontinuity-bounded A* (db-A*), and nonlinear trajectory optimization. 

We are motivated by the success of solving multi-agent path finding problems efficiently with bounded suboptimality guarantees using the family of Conflict-Based Search (CBS) algorithms.
Our algorithm, called Discontinuity-Bounded Conflict-Based Search (db-CBS), performs a three-level search. On the low-level the single-robot motion is planned separately for each robot relying on pre-computed motion primitives obeying the robot's dynamics. The computed trajectories may result in inter-robot collisions, which are resolved one-by-one in the second-level search.
Different from CBS, we allow the trajectories to be dynamically infeasible up to a given discontinuity bound during these two searches.
Then, we use the output trajectories as an initial guess for a joint space trajectory optimization on the third level. 
This is repeated with a lower discontinuity bound in case of failure or if a lower-cost solution is desired.
In this way, we combine the advantages of informed discrete search and optimization to find a near-optimal solution quickly.

The main contribution of this work is a new kinodynamic motion planner for multi-robot systems. We compare our method with other kinodynamic motion planning methods on the same problem instances with the identical objective of computing time-optimal trajectories. Finally, we demonstrate that our planned motions can be safely executed on a team of flying robots. 

\cref{fig:main} shows db-CBS in application with a team of eight heterogeneous robots in simulation and with four robots in real-world settings.

\begin{figure}
    \centering
    \includegraphics[height=3.5cm]{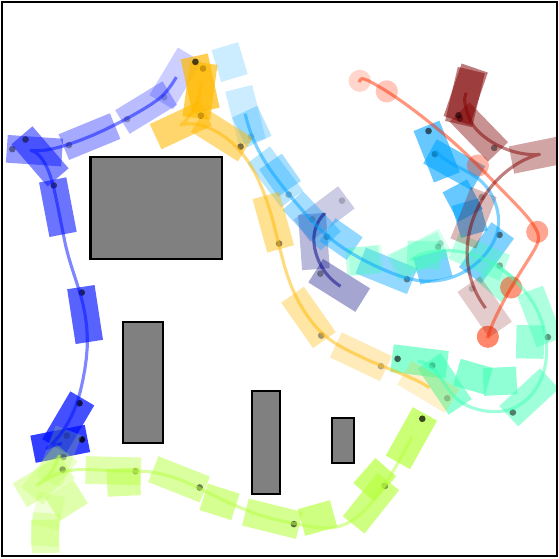} 
    \includegraphics[height=3.5cm]{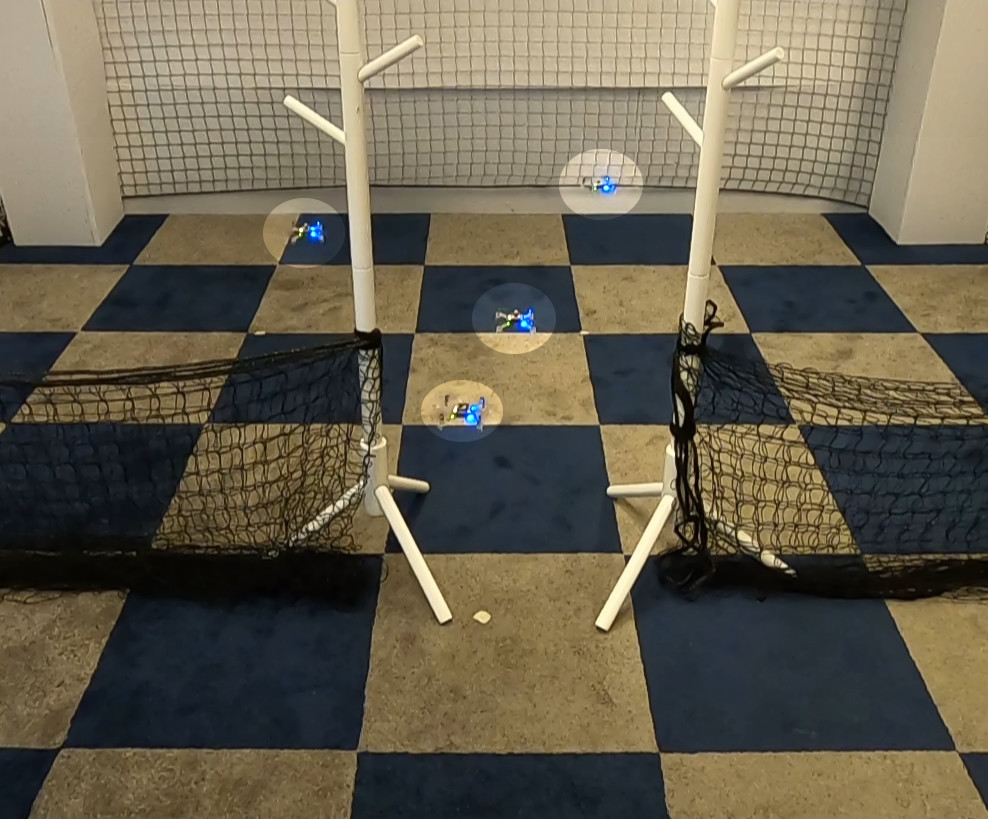}
    \caption{ Db-CBS solving multi-robot kinodynamic motion planning.
    Left: problem instance with a team of 8 heterogeneous robots: unicycle $1^{\text{st}}$ order (rectangle), double integrator (circle), car with trailer (rectangle with box in the end). Right: Real-world experiment challenging each pair of flying robots to pass the narrow window swapping their positions in order to reach their assigned goals.}
    \label{fig:main}
\end{figure}

\section{Related Work}

Multi-Agent Path Finding (\textbf{MAPF}) assumes a discrete state space represented as a graph. A robot can move from one vertex along an edge to an adjacent neighbor in one step; robots cannot occupy the same vertex or traverse the same edge at the same timestep.
For collision checking, vertices are often placed in a grid, but any-angle extensions exist~\cite{yakovlev2017}.
Finding an optimal MAPF solution is NP-hard~\cite{yu_2013}. Current algorithms plan paths for each robot individually and then resolve conflicts by fixing velocities~\cite{cui2012}, assigning priorities to robots~\cite{cap2013} or adding path constraints as in CBS~\cite{cbs}.
Although MAPF has practical applications in real-world problems, it ignores robot dynamics and can result in kinodynamically infeasible solutions.
The planned paths may be followed using a controller and plan-execution framework~\cite{wolfgang2016}.
However, this is suboptimal, may result in collisions, and does not generalize to all robot types.

\textbf{Kinodynamic} motion planning remains challenging, even for single robots. Search-based methods with motion primitives have been adapted to a variety of robotic systems, including high-dimensional systems~\cite{dharmadhikari2020}. Those motion primitives are on a state lattice, thus they are able to connect states properly and can be designed manually~\cite{cohen2010}. Afterwards, any variant of discrete path planning can be used without modifications. Sampling-based planners expand a tree of collision-free and dynamically feasible motions~\cite{kavraki1996,steven2001,sst}. Even though solutions have probabilistic completeness guarantees, they are suboptimal and require post-processing to smooth the trajectory. Optimization-based planners rely on an initial guess and optimize locally, for example by using sequential convex programming (SCP)~\cite{schulman2014}. 
Hybrid methods can combine search and sampling~\cite{sakcak2019}, search and optimization~\cite{natarajan2021} or sampling and optimization~\cite{sanjiban2016}. In this work, we extend db-A*~\cite{dbastar} - a hybrid method connecting ideas from search, sampling, and optimization.

For multi-robot kinodynamic motion planning, single-robot planners applied to the joint space can be used, but do not scale well beyond a few robots.
Extensions of CBS can tackle the Multi-Robot Motion Planning (\textbf{MRMP}) problems. For example, discretized workspace grid cells can be connected via predefined motion primitives~\cite{liron2019}. However, computing primitives remains challenging for higher-order dynamics.
Another option is to use model predictive control~\cite{cbsmpc} or a sampling-based method~\cite{kcbs} as the low-level planner.
The latter, called K-CBS, uses any sampling-based planner and merges the search space of two robots if the number of conflicts between the robots exceeds a threshold. In the worst case, all robots may be merged but probabilistic completeness is achieved.
In Safe Multi-Agent Motion Planning with Nonlinear Dynamics and Bounded Disturbances (S2M2)~\cite{s2m2},  Mixed-Integer Linear Programs (MILP) and control theory are combined by computing piecewise linear paths that a controller should be able to track within a safe region. These safe regions can be large, which makes the method incomplete.

Db-CBS, also leverages CBS, but we use a different single-robot planner and trajectory optimization, resulting in an efficient algorithm with completeness guarantees.

\section{Problem Definition}

We consider $N$ robots and denote the state of the $i^{\text{th}}$ robot with $\vx^{(i)} \in \sX^{(i)} \subset \mathbb R^{d_{x^{(i)}}}$, which is actuated by controlling actions $\vu^{(i)} \in \sU^{(i)} \subset \mathbb R^{d_{u^{(i)}}}$. 
The workspace the robots operate in is given as $\sW \subseteq \mathbb R^{d_w}$ ($d_w\in\{2,3\}$). The collision-free space is $\sW_{\mathrm{free}} \subseteq \sW$. 

We assume that each robot $i \in \{1,\ldots,N\}$ has dynamics:
\begin{equation}
    \label{eq:dynamics}
    \dot \vx^{(i)} = \vf^{(i)}(\vx^{(i)}, \vu^{(i)}).
\end{equation}
The Jacobian of $\vf^{(i)}$ with respect to $\vx^{(i)}$ and $\vu^{(i)}$ are assumed to be available in order to use gradient-based optimization. 

With zero-order hold discretization, \cref{eq:dynamics} can be framed as
\begin{equation}
    \label{eq:dynamics_discrete}
    \vx_{k+1}^{(i)} \approx \step(\vx_k^{(i)}, \vu_k^{(i)}) \equiv \vx_k^{(i)} + \vf(\vx^{(i)}_k, \vu_k^{(i)})\Delta t,
\end{equation}
 where $\Delta t$ is sufficiently small to ensure that the Euler integration holds.

We use $\seqX^{(i)} = \langle \vx_0^{(i)}, \vx_1^{(i)}, \ldots, \vx_{K^{(i)}}^{(i)} \rangle$ as a sequence of states of the $i^{\text{th}}$ robot sampled at times $0, \Delta t, \dots, K^{(i)} \Delta t$ and $\seqU^{(i)} = \langle \vu_0^{(i)}, \vu_1^{(i)}, \ldots, \vu_{K^{(i)}-1}^{(i)} \rangle$ as a sequence of actions applied to the $i^{\text{th}}$ robot for times $[0,\Delta t), [\Delta t, 2\Delta t), \ldots, [(K^{(i)}-1)\Delta t, K^{(i)}\Delta t)$.
Our goal is to move the team of $N$ robots from their start states $\vx_s^{(i)} \in \sX^{(i)}$ to their goal states  $\vx_f^{(i)} \in \sX^{(i)}$ as fast as possible without collisions. This can be formulated as the following optimization problem:
\begin{align}
    &\min_{\{\seqX^{(i)\}},\{\seqU^{(i)}\},\{K^{(i)}\}} \sum_{i=1}^{N} K^{(i)} \label{eq:opt_mrs}
    \\
    &\text{\noindent s.t.}\begin{cases}
    \vx_{k+1}^{(i)} = \step(\vx_k^{(i)}, \vu_k^{(i)}) & \forall i\; \forall k, \\
    \vu^{(i)}_k \in \sU^{(i)}, \,\,\,\, \vx^{(i)}_k \in \sX^{(i)} & \forall i\; \forall k, \\
    \sB^{(i)}(\vx_k^{(i)}) \in \sW_{\mathrm{free}}  & \forall i \; \forall k,  \\
    \sB^{(i)}(\vx_k^{(i)}) \cap  \sB^{(j)}(\vx_k^{(j)}) = \emptyset  & \forall i \neq j\; \forall k, \\
    \vx^{(i)}_0 = \vx_s^{(i)}, \,\,\,\, \vx^{(i)}_{K^{(i)}} = \vx_f^{(i)} & \forall i, \\
    \end{cases} \nonumber
\end{align}
where $\sB^{(i)}: \sX^{(i)} \to 2^\sW$ is a function that maps the configuration of the $i^{\text{th}}$ robot to a collision shape. 
The objective is to minimize the sum of the arrival time of all robots.

\section{Approach}\label{sec:approach}

\subsection{Background}
We first describe db-A* and CBS in more detail, since our work generalizes them into a multi-robot case operating in continuous space and time. 

\textbf{db-A*} is a generalization of A* for kinodynamic motion planning of a single robot that searches a graph of implicitly defined \textit{motion primitives}, i.e., precomputed motions respecting the robot's dynamics~\cite{dbastar}. 
 
A single motion primitive can be defined as a tuple $ \langle \seqX, \seqU, K \rangle$, consisting of state sequences  $\seqX = \langle \vx_0,..., \vx_K \rangle$ and control sequences $\seqU = \langle \vu_0, ..., \vu_{K-1} \rangle$, which obey the dynamics $\vx_{k+1} = \step(\vx_k, \vu_k)$. These motion primitives are used as graph edges to connect states, representing graph nodes, with user-configurable discontinuity $\delta$.

As in A*, db-A* explores nodes based on $f(\vx)=g(\vx)+h(\vx)$, where $g(\vx)$ is the cost-to-come. 
The node with the lowest $f$-value is expanded by applying valid motion primitives.
The output of db-A* is a $\delta$-discontinuity-bounded solution as defined below.
 
\begin{definition}
	\label{definition:discontinuityBounded}
	Sequences $\seqX = \langle \vx_0, \ldots, \vx_K \rangle$, $\seqU = \langle \vu_0, \ldots, \vu_{K-1}\rangle$ are \emph{$\delta$-discontinuity-bounded} solutions iff the following conditions hold:
    \begin{align}
        d(\vx_{k+1}, \step(\vx_k, \vu_k)) \leq \delta \, \forall k,\\
        \vu_k \in \sU,
        \sB(\vx_k) \in \sW_{\mathrm{free}}  \, \forall k,                       \nonumber\\
        d(\vx_0, \vx_s) \leq \delta, \,\,\,\,\,
        d(\vx_K, \vx_f) \leq \delta,\nonumber
    \end{align}
\end{definition}
where $d$ is a \emph{metric} $d: \sX \times \sX \to \mathbb R$, which measures the distance between two states.

To repair discontinuities in the trajectory, db-A* combines discrete search with gradient-based trajectory optimization in an iterative manner, decreasing $\delta$ with each iteration.

\textbf{CBS} is an optimal MAPF solver which works in two levels. 
The high-level search finds conflicts between single-robot paths, while at the low-level individual paths are updated to fulfill time-dependent constraints using a single-robot planner.
Each high-level node contains a set of constraints and, for each robot, a path which obeys the given constraints. The cost of the high-level node is equal to the sum of all single-robot path costs. When a high-level node $P$ is expanded, it is considered as the goal node, if its individual robot paths have no conflicts. If the node $P$ has no conflicts, then its set of paths is returned as the solution. Otherwise, two $P_1$ and $P_2$ child nodes are added. Both child nodes inherit constraints of $P$ and have an additional constraint to resolve the last conflict. For example, if the collision happened at vertex $s$ at time $t$ between robots $i$ and $j$, then the node $P_1$ has an additional constraint $\langle i,s,t \rangle$, which prohibits the robot $i$ to occupy vertex $s$ at time $t$. Likewise, the constraint $\langle j,s,t \rangle$ is added to $P_2$. Afterwards, given the set of constraints for each node, the low-level planner is executed for the affected robot, only.

\subsection{db-CBS}

Our kinodynamic planning approach, db-CBS, adapts CBS to multi-robot kinodynamic motion planning. 
The general structure of the db-CBS consists of the following steps: i) a single-robot path with discontinuous jumps is computed for each robot using db-A*, ii) collisions between individual paths are resolved one-by-one, 
iii) discontinuous motions are repaired into smooth and feasible trajectories with optimization.
These steps are repeated until a solution is found or the desired solution quality is achieved.
Although the two-level framework of CBS remains unchanged in our approach, the definition of conflicts and constraints need to be changed for the continuous domain. 
We define a conflict as $C = \langle i, j, \vx^{(i)}_k, \vx^{(j)}_k, k \rangle$ for a collision between robot $i$ with state $\vx^{(i)}_k$ and robot $j$ with state $\vx^{(j)}_k$ identified at time $k$. The resulting constraint for robot $i$ is $\langle i, \vx^{(i)}_k, k \rangle$, which prevents it to be within a distance of $\delta$ to state $\vx^{(i)}_k$ at time $k$. Similarly, the constraint for robot $j$ is $\langle j, \vx^{(j)}_k, k \rangle$.
The notion of a discontinuity $\delta$ helps us here to define the constraint as an actual volume (around a point), which is crucial for the efficiency and completeness guarantees.

The high-level search of db-CBS is shown in \cref{alg:dbcbs}. 
Its major changes compared to CBS are highlighted.
Db-CBS finds a solution in an iterative manner and decreases the value of the discontinuity allowed in the final path with each iteration (\cref{alg:dbcbs:overview:delta}). 
In each iteration, the root node of the constraint tree is initialized by running the low-level planner for each individual robot separately with the given set of motion primitives, updated $\delta$ and no constraints (\cref{alg:dbcbs:ll_start}). 
Each single-robot motion consists of a concatenation of motion primitives that avoid robot-obstacle collisions as well as the specified constraints.
If single-robot motions are found successfully, they are validated for inter-robot collision (\cref{alg::dbcbs:earliest_conflict}) by checking sequentially at each timestep if a collision between any two robots occurred. 
The checking terminates when the earliest conflict is detected between a pair of robots.
This conflict is resolved as in the original CBS by creating constraints and computing trajectories with the low-level planner for each constrained robot (\cref{alg::dbcbs:constraint_loop_begin}-\cref{alg::dbcbs:constraint_loop_end}). 
Db-CBS uses an extension of db-A* that can directly consider dynamic constraints efficiently for the low-level search.
If the motions have no conflicts, they are considered valid solution and used as an initial guess for the trajectory optimization (\cref{alg:overview:dbcbs:opt}). The trajectory optimization is performed in the joint configuration space with: 
\begin{align}
      \label{eq:mrs_opt}
   &   \min_{\{\mathbf{X}^{(i)}\}, \{\mathbf{U}^{(i)} \}, \Delta t  }  \sum_{i=1}^{N} \sum_{k=0}^{K-1}  ~ J(\vx_k^{(i)},\vu_k^{(i)}) \\ 
      &\text{\noindent s.t.}\begin{cases} 
        \begin{bmatrix}
            \vx_{k+1}^{(1)} \\
            \vx_{k+1}^{(2)} \\
              \vdots\\
            \vx_{k+1}^{(N)}
        \end{bmatrix} = 
        \begin{bmatrix}
            \vx_{k}^{(1)} \\
            \vx_{k}^{(2)} \\
              \vdots\\
            \vx_{k}^{(N)}
        \end{bmatrix} + 
         \begin{bmatrix}
            \vf(\vx_{k}^{(1)},\vu_k^{(1)}) \\
            \vf(\vx_{k}^{(2)},\vu_k^{(2)}) \\
              \vdots\\
            \vf(\vx_{k}^{(N)},\vu_k^{(N)}) 
        \end{bmatrix} \Delta t\;\; \forall k, \;
        \\ %
        \begin{bmatrix}
            \vu_{k}^{(1)},
            \vu_{k}^{(2)},
            \hdots,
            \vu_{k}^{(N)}
        \end{bmatrix}^\top \in \bar\sU\;\; \forall k, 
        \\
        \begin{bmatrix}
            \vx_{k}^{(1)},
            \vx_{k}^{(2)},
            \hdots,
            \vx_{k}^{(N)}
        \end{bmatrix}^\top \in \bar\sX\;\; \forall k, 
        \\
        \vg\left(\begin{bmatrix}\vx_{k}^{(1)}, \vx_{k}^{(2)}, \ldots, \vx_{k}^{(N)}\end{bmatrix}^\top\right) \ge \textbf{0} \;\; \forall k,\\
        \vx^{(i)}_0 = \vx_s^{(i)}, \,\,\,\,\, \vx^{(i)}_{K^{(i)}} = \vx_f^{(i)} \;\; \forall i.
        \end{cases} \nonumber
\end{align} 
Here, the cost is $J(\vx_k^{(i)},\vu_k^{(i)}) = \Delta t + \beta ||\vu_k^{(i)}||^2$ with a regularization $\beta$, and $\bar\sU = \times_{i} \sU^{(i)}$, $\bar\sX = \times_{i} \sX^{(i)}$ are the joint action space and joint state space, respectively.
The optimization variables are state and control actions of all robots, and the length of the time interval $\Delta t$. The signed-distance function $\vg()$ performs robot-robot and robot-obstacle collision avoidance for each state. 
Since our ultimate objective is to minimize the sum of arrival times, we add the goal constraints at different timesteps $K^{(i)}$ for each robot based on the discontinuity-bounded initial guess.
The trajectory optimization problem \cref{eq:mrs_opt} can be solved with nonlinear, constrained optimization. In our work we use differential dynamic programming (DDP), adding the goal and collision constraints with a squared penalty method.

\subsection{db-A* For Dynamic Obstacles}
In CBS and db-CBS, constraints arise directly from conflicts and represent states to be avoided during the low-level search, which requires planning with dynamic obstacles.
In CBS, the most common approach is to use A* in a space-time search space or SIPP~\cite{DBLP:conf/aips/HuHGSS22}.
Inspired by SIPP, we extend db-A* in order to solve single-robot path planning which is consistent with constraints. Our extended db-A* is given in \cref{alg:dbAstarMRS}.
The notation $\vx\oplus m$ indicates that the motion $m$ is applied to state $\vx$. 

Recall that a constraint $\langle i, \vx_c, k\rangle$ enforces $d(\vx_c, \vx_k^{(i)}) > \delta$.
We handle all constraints during the expansion of neighboring states, where we only include motions that are at least $\delta$ away from any constrained state (\cref{alg:dbAstarMRS:constraint_loop_begin} - \cref{alg:dbAstarMRS:check_collision}).
In addition, avoiding dynamic obstacles might require to reach a state with a slower motion.
Thus, we keep a list of safe arrival time, parent node, and motion to reach this state from the parent (\cref{alg:dbAstarMRS:Oinit}).
When a potentially better path is found, we keep previous solution motions rather than removing them (\cref{alg:dbAstarMRS:newRewiring}).
Conceptually, this is similar to SIPP, except that we do not store safe arrival intervals but potential arrival time points, since not all robots are able to wait. 
Empirically, storing the arrival times compared to always creating new nodes is significantly faster in our experiments.

\begin{algorithm}[t]
    \caption{db-CBS: Discontinuity-Bounded Conflict-Based Search (High-Level Search)}
    \label{alg:dbcbs}
    \DontPrintSemicolon
    \SetVlineSkip{2pt}

    \SetKwFunction{AddPrimitives}{AddPrimitives}
    \SetKwFunction{ExtractPrimitives}{ExtractPrimitives}
    \SetKwFunction{ComputeDelta}{ComputeDelta}
    \SetKwFunction{DiscontinuityBoundedAstar}{db-A*}
    \SetKwFunction{Optimization}{Optimization}
    \SetKwFunction{DisableMotions}{DisableMotions}
    \SetKwFunction{Report}{Report}
    
    \SetKwFunction{GetIndividualPaths}{GetIndividualPaths}
    \SetKwFunction{GetSolutionCost}{GetSolutionCost}
    \SetKwFunction{GetEarliestConflict}{GetEarliestConflict}
    \SetKwFunction{ValidatePath}{ValidatePath}
    \SetKwFunction{NewNode}{Node}
    \SetKwFunction{NearestNeighborInit}{NearestNeighborInit}
    \SetKwFunction{DisableMotions}{DisableMotions}
    \SetKwFunction{PriorityQueuePop}{PriorityQueuePop}
    \SetKwFunction{GetConstraints}{GetConstraints}
    \SetKwFunction{PriorityQueueInsert}{PriorityQueueInsert}
    \tcc{Main changes compared to CBS are highlighted.}
    \KwData{$ \{\vx_s^{(i)}\}, \{\vx_f^{(i)}\}, \sW_{\mathrm{free}}, N $}
    \KwResult{$\{\seqX^{(i)}\}, \{\seqU^{(i)} \}$}
    
    $\sM = \emptyset$ \Comment*{Initial set of motion primitives}
    \tikzmk{A}\For{$n=1,2,\ldots$ \label{alg:dbCBS:expand2}}{
        \For{$r \in N $ \label{alg:dbCBS:expand1}}{
            $\sM^{(r)} \leftarrow \sM^{(r)} \cup \AddPrimitives(r)$\label{alg:dbcbs:overview:get_M} \Comment*{Add motion primitives for each robot dynamics}
        }\marklineSix{-10pt} 
        $\delta_n \leftarrow \ComputeDelta()$ \Comment*{Update discontinuity bound} \label{alg:dbcbs:overview:delta}\;
        $\mathcal S \leftarrow \NewNode (solution: \emptyset, constraints: \emptyset)$   \Comment*{Root node} \label{alg:dbcbs:start_node}
        \tikzmk{A}\ForEach{$robot\: i \:\in\: \sR $}{
            $\mathcal S.solution[i] \leftarrow$\DiscontinuityBoundedAstar{$\vx_s^{(i)}, \vx_f^{(i)}, \sW_{\mathrm{free}}, \sM^{(i)}, \delta, None$} \label{alg:overview:dbcbs:dbAstar} \Comment*{Single-robot planner with no constraints} \label{alg:dbcbs:ll_start}
         }\marklineFour{-15pt} 
        $\mathcal S.cost \leftarrow \GetSolutionCost(\mathcal S.solution)$  \Comment*{Update the node cost} \label{alg:dbcbs:start_cost}
        $\sO \leftarrow \{\mathcal S\} $ \Comment*{Initialize open priority queue} \label{alg::dbcbs:open_init}
        \While{$|\sO| > 0$}{
            $P \leftarrow \PriorityQueuePop(\sO)$ \Comment*{Lowest solution cost}
            $C \leftarrow \GetEarliestConflict(P.solution)$  \Comment*{Check for collisions between individual motions} \label{alg::dbcbs:earliest_conflict}
            \eIf{$C = \emptyset$}{ 
               \tikzmk{A}$\{\seqX^{(i)} \}, \{\seqU^{(i)}\} \leftarrow$ \Optimization{$P.solution$}\label{alg:overview:dbcbs:opt}\marklineOne{-25pt} \\
                \If{$\{\seqX^{(i)} \}, \{\seqU^{(i)}\}$ successfully computed}{
                    \Report{$\{\seqX^{(i)} \}, \{\seqU^{(i)}\}$ } \Comment*{New solution found}
                }
            }
            {    
            $\langle i, j, \vx^{(i)}, \vx^{(j)}, k \rangle \leftarrow  \GetConstraints(C)$ \Comment*{Get constraints from conflicts}
            \ForEach{$c \:\in\: \{ i,j \}$}{ \label{alg::dbcbs:constraint_loop_begin}
                $P' \leftarrow \NewNode(solution: P.solution, constraints: P.constraints \cup \{\langle c, \vx^{(c)}, k \rangle\}) $ \\
                $P'.solution[c] \leftarrow$ \DiscontinuityBoundedAstar{$\vx_s^{(c)}, \vx_f^{(c)}, \sW_{\mathrm{free}}, \sM^{(c)}, 
                \delta, P'.constraints$}  \label{alg:dbcbs:dbAstar_with_constr} \;
                $P'.cost = \GetSolutionCost(P'.solution)$ \\
                $\PriorityQueueInsert(\mathcal O, P')$ \label{alg::dbcbs:constraint_loop_end}
            }
            }
        }           
    }
\end{algorithm}

\begin{algorithm}[t]
    \caption{db-A* with Dynamic Obstacles}
    \label{alg:dbAstarMRS}
    \DontPrintSemicolon
    \SetVlineSkip{2pt}

    \SetKwFunction{NearestNeighborInit}{NearestNeighborInit}
    \SetKwFunction{NearestNeighborQuery}{NearestNeighborQuery}
    \SetKwFunction{NearestNeighborAdd}{NearestNeighborAdd}
    \SetKwFunction{PriorityQueuePop}{PriorityQueuePop}
    \SetKwFunction{PriorityQueueInsert}{PriorityQueueInsert}
    \SetKwFunction{PriorityQueueUpdate}{PriorityQueueUpdate}
    \SetKwFunction{CheckForCollision}{CheckForCollision}
    \SetKwFunction{Node}{Node}

     \tcc{Main changes compared to db-A* are highlighted.}
    \KwData{$\vx_s, \vx_f, \sW_{\mathrm{free}}, \sM, \delta, \sC$}
    \KwResult{$\seqX, \seqU, K$ or Infeasible}
    
    $\mathcal T_m \leftarrow \NearestNeighborInit(\mathcal M)$ \Comment*{Use start states of motions (excl. position)} \label{alg:dbAstarMRS:Tm}
    $\mathcal T_n \leftarrow \NearestNeighborInit(\{\vx_s\})$ \Comment*{capture explored vertices (incl. position)}\label{alg:dbAstarMRS:Tn}
    \tikzmk{A}$\mathcal O \leftarrow \{\Node(\vx: \vx_s, g: 0, h: h(\vx_s), A: \{(g: 0, p: None, a: None)\}) \}$ \label{alg:dbAstarMRS:Oinit} \Comment*{Initialize open priority queue} \marklineTwo{-10pt}
    \While{$|\mathcal O| > 0$}{
        $n \leftarrow \PriorityQueuePop(\mathcal O)$ \Comment*{Lowest f-value} \label{alg:dbAstarMRS:Opop}
        \If{$d(n.\vx, \vx_f) \leq \delta$ \label{alg:dbAstarMRS:sol_cond}}{
            \Return $\seqX, \seqU, K$ \Comment*{Traceback solution} \label{alg:dbAstarMRS:sol}
        }
        \Comment{Find applicable motion primitives with discontinuity up to $\alpha \delta$}
        $\mathcal M' \leftarrow \NearestNeighborQuery(\mathcal T_m, n.\vx, \alpha\delta)$\label{alg:dbAstarMRS:Mprime}\;
        \ForEach{$m\in \mathcal M'$ \label{alg:dbAstarMRS:expand1}}{
        \If{$n.\vx\oplus m \notin \sW_{\mathrm{free}}$ \label{alg:dbAstarMRS:collision}}{
            \Continue \label{alg:dbAstarMRS:expand2} \Comment*{entire motion is not collision-free}
        }
        $g_t \leftarrow n.g + cost(m)$ \Comment*{tentative g score for this action}
        \tikzmk{A}\ForEach{$c\in \sC$ \label{alg:dbAstarMRS:constraint_loop_begin}}{
            $\vx' = m[\lfloor \frac{c.k - n.g}{\Delta t} \rfloor]$ \Comment*{Motion primitive state for checking} \label{alg::dbAstar:get_c.x}\;
            \If{$d(\vx',c.\vx) \leq \delta$} { \label{alg:dbAstarMRS:distance}
                \Continue loop \cref{alg:dbAstarMRS:expand1}  \label{alg:dbAstarMRS:check_c.x_distance} \Comment*{$m$ does not obey all constraints}
            }\label{alg:dbAstarMRS:check_collision}
        }\marklineSix{-20pt}
        \Comment{find already explored nodes within $(1-\alpha)\delta$}
        $\mathcal N' \leftarrow \NearestNeighborQuery(\mathcal T_n, n.\vx\oplus m, (1-\alpha)\delta)$ \label{alg:dbAstarMRS:Nprime}\;
        \eIf{$\mathcal N' = \emptyset$}{
            \tikzmk{A}$\PriorityQueueInsert(\mathcal O, \Node(\vx: n.\vx \oplus m, g: g_t, h: h(n.\vx\oplus m), A: \{(g: g_t, p: n, a: m)\})$\marklineThree{-25pt} \label{alg:dbAstarMRS:Oadd}\;
            $\NearestNeighborAdd(\mathcal T_n, n.\vx \oplus m)$ \label{alg:dbAstarMRS:TnGrow}\;
        }{
            \ForEach{$n'\in \mathcal N'$}{
                \If{$g_t < n'.g$ }{
                \Comment*{This motion is better than a known motion} \label{alg:dbAstarMRS:Oupdate1}
                    $n'.g = g_t$ \Comment*{Update cost}
                    \tikzmk{A}$n'.A = n'.A \cup \{(g: g_t, p: n, a: m)\}$ \Comment*{Add parent and action}\label{alg:dbAstarMRS:newRewiring}\marklineOne{-35pt}
                    $\PriorityQueueUpdate(\mathcal O, n')$ \label{alg:dbAstarMRS:Oupdate2}
                }
            }
        }
        }
    }
    \Return Infeasible
\end{algorithm}

\subsection{Properties and Extensions}

CBS is a complete and optimal algorithm for MAPF, i.e., if a solution exists it will find it and the first reported solution has the lowest possible cost (sum of costs of all agents).
The proof~\cite{cbs} uses two arguments: i) for completeness, it is shown that the search will visit all states that can contain the solution, i.e., no potential solution paths are pruned, and ii) for optimality, it is shown that CBS visits the solutions in increasing order of costs ($f$-value) and can therefore not have missed a lower-cost solution once it terminates.

The continuous time and continuous space renders the full enumeration proof argument infeasible for MRMP problems.
Instead, we consider probabilistic completeness (PC; the probability of finding a solution if one exists is 1 in the limit over search effort) and asymptotic optimality (AO; the cost difference between the reported solution and an optimal solution approaches zero in the limit over search effort).
We assume that there exists a finite $\delta > 0$ such that the trajectory with discontinuity is optimized correctly using the trajectory optimization. 
In the following, we show that db-CBS is both probabilistically complete and asymptotically optimal and therefore an anytime planner, similar to sampling-based methods such as SST*.

\begin{theorem}
    \label{theorem:ao}
    The db-CBS motion planner in \cref{alg:dbcbs} is asymptotically optimal, i.e.
    \begin{equation}
        \lim_{n\to\infty} P(\{ c_n - c^* > \epsilon \}) = 0, \; \forall \epsilon > 0,
    \end{equation}
    where $c_n$ is the cost in iteration $n$ and $c^*$ is the optimal cost.
\end{theorem}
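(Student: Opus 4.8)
The plan is to decouple the argument into a per-iteration optimality claim and a cross-iteration convergence claim, then combine them using the anytime nature of $c_n$. Since db-CBS is anytime and retains the best incumbent across iterations, $c_n$ is non-increasing in $n$, so it suffices to exhibit, for every $\epsilon>0$, an iteration index after which a solution of cost at most $c^*+\epsilon$ is found with probability tending to one. I would fix an optimal joint trajectory $\pi^*$ of cost $c^*$ and show that the discrete search in iteration $n$ can both represent, and in fact return, a feasible trajectory whose post-optimization cost approaches $c^*$ as the primitive set $\sM_n$ grows (via \textsc{AddPrimitives}) and the bound $\delta_n$ shrinks.

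First I would establish per-iteration optimality of the discrete search for fixed $\sM_n$ and $\delta_n$. With a finite primitive set, db-A* explores an implicitly defined finite graph whose edges are the motions reachable within $\alpha\delta_n$ and compatible with the volumetric ($\delta_n$-ball) constraints; as finite-graph A* with a consistent heuristic it returns a minimum-cost constrained single-robot motion. The high-level search is unchanged from CBS, so the CBS optimality argument carries over: high-level nodes are expanded in non-decreasing order of their $f$-value (sum of single-robot costs), the two child constraints exhaustively partition the ways of resolving each conflict, and therefore no lower-cost conflict-free assignment can be skipped. Hence, for fixed $(\sM_n,\delta_n)$, the first conflict-free node returned is a minimum-cost $\delta_n$-discontinuity-bounded solution over the current discretization. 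I would then invoke the paper's standing assumption that for sufficiently small $\delta$ the trajectory optimization \cref{eq:mrs_opt} repairs the discontinuous guess into a feasible trajectory; because the optimizer only minimizes the joint cost subject to feasibility, its reported cost is bounded by the discrete guess cost plus a term that vanishes as $\delta_n\to 0$.

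Next I would prove the cross-iteration convergence. I would assume $\pi^*$ has positive clearance, i.e.\ a tube of radius $\rho>0$ around it is collision-free and keeps the robots mutually separated (the usual robustness hypothesis for asymptotic optimality). Building on the probabilistic completeness of db-A* \cite{dbastar}, as $\sM_n$ is enlarged its motions become dense in the space of short feasible motions, so for $n$ large enough that $\alpha\delta_n<\rho$ there exists, with probability tending to one, a concatenation of primitives that $\delta_n$-approximates $\pi^*$ while remaining inside the tube. Staying inside the tube is exactly what guarantees that the $\delta_n$-volumetric constraints generated during conflict resolution never prune this approximation, so it is a feasible node of the discrete search, and a reparametrization shows its discrete cost exceeds $c^*$ by at most $\eta(\delta_n,\sM_n)\to 0$. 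Combining with the per-iteration optimality above, the solution actually returned in iteration $n$ has post-optimization cost at most $c^*+\eta(\delta_n,\sM_n)+o(1)$. Taking $n\to\infty$ drives this bound below any $\epsilon$ with probability one, which together with monotonicity of $c_n$ yields $\lim_{n\to\infty}P(\{c_n-c^*>\epsilon\})=0$.

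The main obstacle is the quantitative coupling in the convergence step: I must ensure that the volumetric constraints produced by conflict resolution, which excise $\delta_n$-balls around conflicting states, do not exclude the near-optimal tube approximation of $\pi^*$, and simultaneously that the primitive-sampling dispersion shrinks at least as fast as $\delta_n$ so that $\eta(\delta_n,\sM_n)$ genuinely vanishes. A second delicate point is reconciling the integer-step discrete cost $\sum_i K^{(i)}$ with the continuous optimum $c^*$ under a changing $\Delta t$, and guaranteeing that the penalty-based DDP returns a truly feasible trajectory rather than an infeasible stationary point; both are controlled only under the positive-clearance and small-$\delta$ hypotheses, which is why those assumptions are essential.
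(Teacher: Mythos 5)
Your proposal takes essentially the same route as the paper's proof sketch: per-iteration optimality of the discrete conflict-based search over the graph implicitly defined by the finite primitive set (invoking the CBS optimality theorem), followed by a limit argument as the primitive set densifies and $\delta$ shrinks, combined with the standing assumption that the trajectory optimization repairs sufficiently small discontinuities. The clearance-tube, dispersion, and constraint-volume details you flesh out (and the delicate points you flag) are precisely what the paper's sketch defers to its references, namely \cite[Theorem 1 and 2]{solis2021} and \cite[Theorem 2]{dbastar}, so your version is a faithful, more explicit expansion of the same argument.
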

\begin{proofsketch}
    Each iteration in \cref{alg:dbCBS:expand2} operates on a discrete search problem over the graph implicitly defined by the finite set of motion primitives. 
    We find an optimal solution, if one exists, within this discretization by \cite[Theorem 1]{cbs}.
    Subsequent iterations add more motion primitives, reduce $\delta$, and therefore produce a larger discrete search graph, for which the optimality proof of CBS still holds.
    In the limit ($|\sM|\to\infty$, $\delta\to 0$), db-CBS will report the optimal solution.

    This argument is the same as the more detailed version in \cite[Theorem 1 and 2]{solis2021}, for combining PRM and CBS. A formal version of this proof, including bounding the actual probability, is given in \cite[Theorem 2]{dbastar}.
\end{proofsketch}

\begin{theorem}
    \label{theorem:pc}
    The db-CBS motion planner in \cref{alg:dbcbs} is probabilistically complete.
\end{theorem}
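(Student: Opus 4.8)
The plan is to reduce the multi-robot claim to two facts that are already established: the probabilistic completeness of the single-robot planner db-A* (\cite[Theorem 2]{dbastar}) and the completeness of CBS on the finite graph that a fixed primitive set defines implicitly (\cite[Theorem 1]{cbs}). First I would make the hypothesis precise by assuming that \cref{eq:opt_mrs} admits a \emph{robustly} feasible solution: a joint trajectory that is dynamically feasible, obstacle-free, and inter-robot collision-free with strictly positive clearance. Such a clearance assumption is the usual prerequisite for probabilistic completeness of sampling-based planners, and it is what allows a feasible motion to survive discretization once the bound $\delta$ is small enough.

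Next I would fix an iteration $n$ of \cref{alg:dbCBS:expand2}, with bound $\delta_n$ and primitive set $\sM_n$ sampled from a distribution whose support covers the reachable state-action space, and separate the deterministic and probabilistic parts. For fixed $\sM_n$, db-A* is ordinary A* on the finite graph of primitives, hence complete and optimal over that discretization; the randomness enters only through whether the graph is rich enough to contain a solution. Here I would invoke \cite[Theorem 2]{dbastar} per robot: because the $i$-th component of the robust solution is itself a clearance-positive single-robot trajectory, the probability that $\sM_n$ fails to admit a $\delta_n$-discontinuity-bounded trajectory tracking it tends to $0$ as $|\sM_n|\to\infty$. A union bound over the $N$ robots then makes the whole discretization rich enough with probability approaching $1$.

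For the high-level search I would use that, with $\sM_n$ and $\delta_n$ fixed, db-CBS is exactly CBS run on this finite implicit graph, so \cite[Theorem 1]{cbs} applies: the branching is conflict-preserving, since the two children constrain robot $i$ and robot $j$ respectively and together cover every resolution of the detected conflict, so no $\delta_n$-discontinuity-bounded solution is pruned. Projecting the robust solution onto the graph, at each conflict its true trajectories are separated by more than the collision distance, so for $\delta_n$ small enough at least one robot lies outside the forbidden $\delta_n$-ball and thereby satisfies one branch; hence the robust discretization is never discarded and the high-level loop returns a conflict-free discrete solution. I would then invoke the standing assumption that, below a finite $\delta$ threshold, the optimization in \cref{alg:overview:dbcbs:opt} repairs this discontinuous guess into a feasible joint trajectory, which db-CBS reports. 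Since the outer loop drives $|\sM_n|\to\infty$ and $\delta_n\to 0$, the per-iteration success probability tends to $1$ and the overall probability of reporting a solution tends to $1$, paralleling the combined PRM--CBS argument of \cite[Theorem 1 and 2]{solis2021}.

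The main obstacle I anticipate is the high-level step: unlike classical MAPF, where a constraint forbids a single $(\text{vertex},\text{time})$ pair, here each constraint forbids an entire $\delta$-ball, and I must show these volumetric constraints never permanently exclude the robust solution. The crux is tying the continuous positive-clearance assumption to the discrete conflict-preserving branching, arguing that the clearance eventually dominates the ball radius $\delta_n$ so that the projected robust solution always survives in at least one child node. This coupling between the sampling-based clearance argument and the combinatorial completeness of CBS is the delicate link; by contrast, the low-level and optimization steps reduce cleanly to results already cited for \cref{theorem:ao}.
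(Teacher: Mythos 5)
Your proposal is correct in substance, but it takes a genuinely different route from the paper. The paper's entire proof of this theorem is one line: probabilistic completeness follows \emph{directly from \cref{theorem:ao}}, since asymptotic optimality implies probabilistic completeness (with the convention that a failed iteration has infinite cost, $\lim_{n\to\infty} P(c_n - c^* > \epsilon) = 0$ forces the probability of finding no solution to vanish). All of the heavy lifting you do --- per-iteration completeness of db-A* on the implicit finite graph, the union bound over robots, CBS's conflict-preserving branching, the repair assumption for the optimization step, and the limit $|\sM_n|\to\infty$, $\delta_n\to 0$ --- is packaged in the paper inside the proof sketch of \cref{theorem:ao}, which cites \cite[Theorem 1]{cbs}, \cite[Theorems 1 and 2]{solis2021}, and \cite[Theorem 2]{dbastar}, so re-deriving it here duplicates work the paper has already banked. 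What your direct argument buys in exchange is real, though: it does not rely on optimality at all (feasibility suffices), it makes explicit the positive-clearance / robust-feasibility hypothesis that the paper leaves implicit, and --- most valuably --- it surfaces and resolves a subtlety the paper glosses over entirely, namely that db-CBS constraints forbid $\delta$-balls rather than single vertex-time pairs, so one must check that the two children of a conflict node cannot \emph{both} exclude the robust solution; your observation that clearance eventually dominates $2\delta_n$ plus the collision distance is exactly the right fix. If you were writing this for the paper, the economical move is to prove \cref{theorem:ao} first and then state the one-line implication; your volumetric-constraint argument would belong in that proof, not here.
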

\begin{proof}
    This directly follows from \cref{theorem:ao}, since asymptotic optimality implies probabilistic completeness.
\end{proof}

This result compares to the baseline algorithms as follows.
SST* is probabilistically complete and asymptotically near-optimal, where the near-optimality stems from a fixed hyperparameter similar to the time-varying $\delta$ in db-CBS.
K-CBS is probabilistically complete and could achieve asymptotic optimality by using a framework such as AO-x~\cite{AO-RRT}.
S2M2 is neither probabilistically complete nor asymptotically optimal, because it uses pessimistic approximations and priority-based search.
We show concrete counter-examples in which S2M2 fails to find a solution in \cref{sec:results}.

We note that db-CBS can naturally plan for heterogeneous teams of robots, where individual team members may have unique dynamics or collision shapes.
This does not require any algorithmic changes, as the low-level planner can use different motion primitives and constraints are defined for each robot themselves, and conflicts can be detected using arbitrary collision shapes.
On the optimization side, we rely on nonlinear optimization and stacked states  which naturally extends to varied state and action dimensions.

\section{Results}
\label{sec:results}
We compare our method with other multi-robot kinodynamic motion planners on the same problem instances. 
We consider robots with three to five-dimensional states: unicycle, $2^{\text{nd}}$ order unicycle, double integrator, and car with trailer, see~\cite{dbastar} for dynamics and bounds.
For the double integrator, we use $v \in [-0.5, 0.5]~\si{m/s}$, $a \in [-2.0, 2.0]~\si{m/s^2}$.
For testing scenarios we focus on cases where the workspace has small dimensions, thus it is challenging for robots to find collision-free motions in a time-optimal way.

In each environment, we compare our method with K-CBS~\cite{kcbs}, S2M2~\cite{s2m2}, and joint space SST*~\cite{sst}.
We analyze success rate $(p)$, computational time until the first solution is found $(t)$, and cost of the first solution $(J)$. An instance is not solved successfully if an incorrect solution is returned, or no solution is found after \SI{5}{min}.
To compare relative improvements over baselines, we use a notion of regret, i.e., $r^{\text{s2m2}} = (J^{\text{s2m2}} - J^{\text{dbcbs}}) / J^{\text{s2m2}}$.

We use OMPL~\cite{OMPL} and FCL~\cite{FCL} for K-CBS, SST* and db-CBS.
For fair comparison, we set the low-level planner of K-CBS to SST*, and modify the collision checking in order to support robots of any shape. For S2M2, we use the Euler integration to propagate the robot state and bound control actions.
For K-CBS and S2M2 we use the respective publicly available implementations from the authors.
For K-CBS, we use a merge bound of $\infty$, effectively disabling the merging.
When comparing with S2M2, we use a spherical first order unicycle model and quadrupled angular velocity bounds $\omega \in [-2, 2]~\si{rad/s}$, since the provided code was unable to solve most of our very dense planning problems otherwise.
For \textbf{db-CBS} \cref{alg:dbcbs} and \cref{alg:dbAstarMRS} are implemented in C++, and the benchmarking script is written in Python. 
For optimization, we extend Dynoplan~\cite{dynoplan}, which uses the DDP solver Crocoddyl~\cite{mastalli20crocoddyl}.
We use a workstation (AMD Ryzen Threadripper PRO 5975WX @ 3.6 GHz, 64 GB RAM, Ubuntu 22.04).
Example results for all algorithms and real-world experiments are available in the supplemental video. The code and benchmark problems are publicly available. 

\subsection{Canonical Examples}

\begin{figure}
	\setlength{\tabcolsep}{0.0em} %
	\centering
	\begin{tabular}{ccc}
		\includegraphics[width=0.32\linewidth]{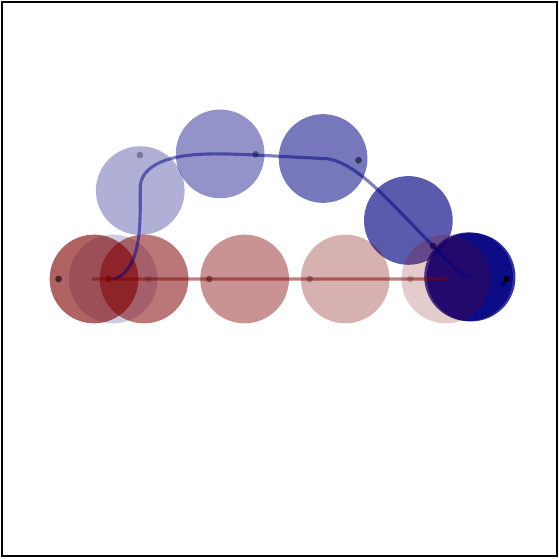}      &
        \includegraphics[width=0.32\linewidth]{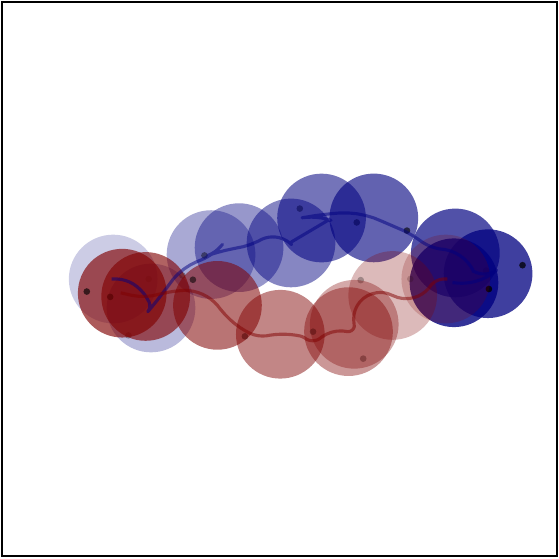} &
		\includegraphics[width=0.32\linewidth]{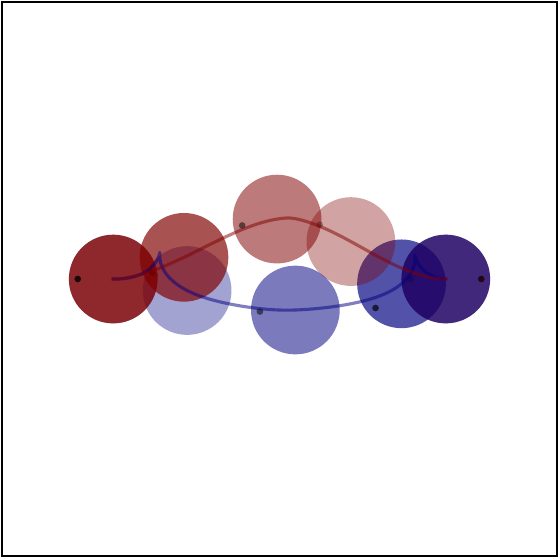}   \\
        \includegraphics[width=0.32\linewidth]{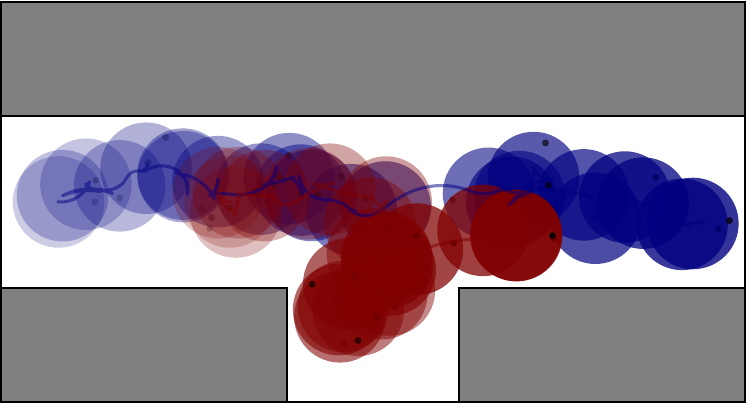} &
		\includegraphics[width=0.32\linewidth]{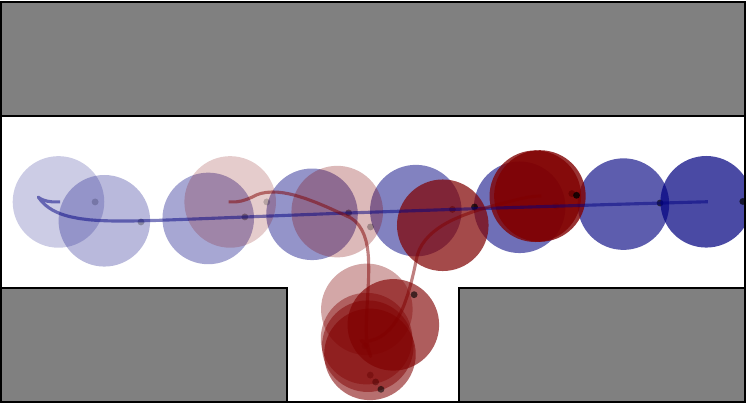}      &
		\includegraphics[width=0.32\linewidth]{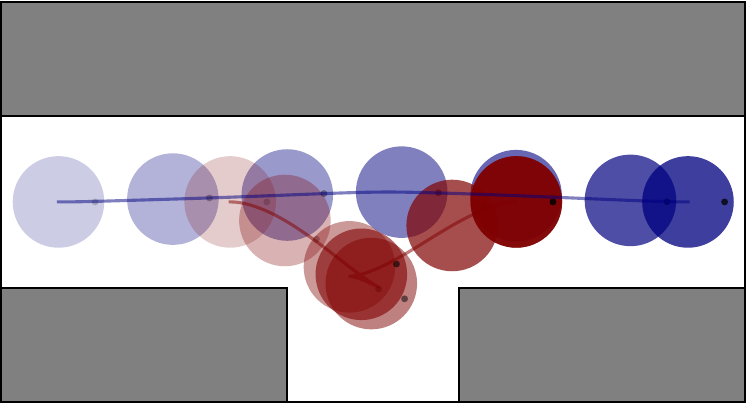}   \\
        \includegraphics[width=0.32\linewidth]{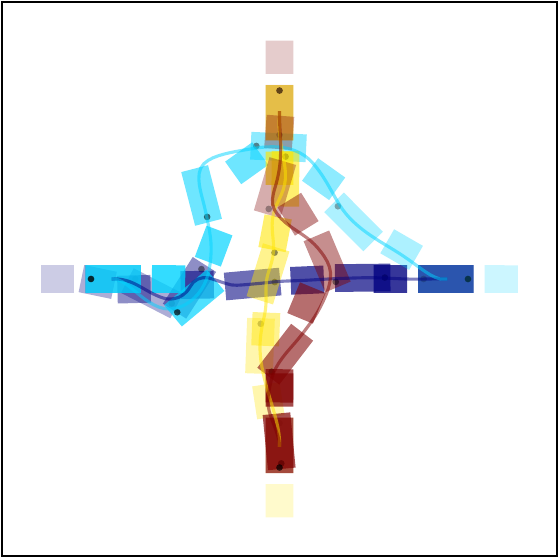} &
		\includegraphics[width=0.32\linewidth]{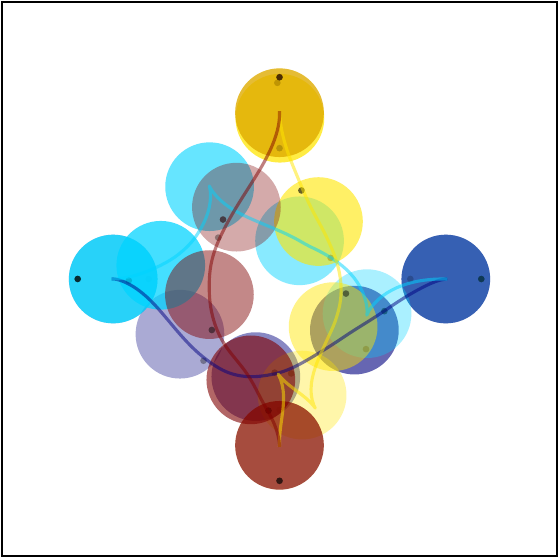}  &
        \includegraphics[width=0.32\linewidth]{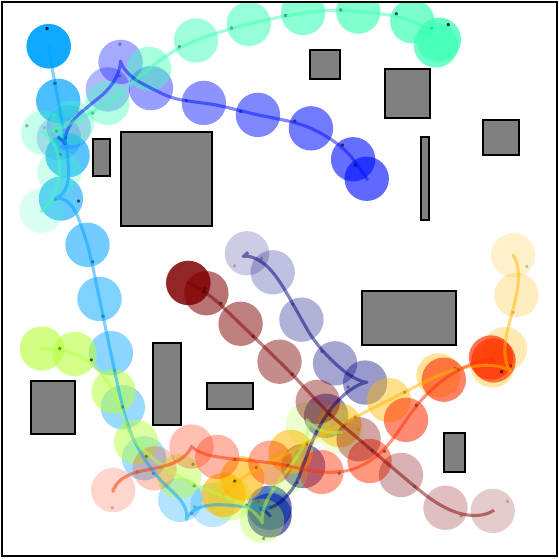}      \\
	\end{tabular}
	\caption{Methods are listed from left to right. Upper row: S2M2, K-CBS, db-CBS with \emph{swap}. Middle row: SST*, S2M2, db-CBS with \emph{alcove}. 
    Last row shows instances where only db-CBS is able to return a solution: 4 cars with trailer with \emph{swap}, 4 unicycles with \emph{swap}, 8 unicycles with \emph{random} start and goal states.}
	\label{fig:result}
\end{figure}

We compare the algorithms on canonical examples often seen in the literature, see rows 1--3 in \cref{tab:unified} and \cref{fig:result}.

\subsubsection{Swap} the environment has no obstacles and 2 robots have to swap their positions, i.e, each robot's start position is the goal position of the other robot. In this environment all baselines succeed with SST* and S2M2 being the fastest to find a solution. However, their solution costs are higher compared to db-CBS, even though less than of K-CBS.

\begin{table*}
\caption{Canonical examples, scalability with varying robot numbers, and heterogeneous systems. \\
Median values over 10 trials per row. Bold entries are the best for the row, \textemdash no solution found, $\star$ not tested.}
\centering
\setlength{\tabcolsep}{5pt}
\begin{tabular}{c || c || r|r|r|r || r|r|r|r || r|r|r|r || r|r|r|r}
\# & Instance & \multicolumn{4}{c||}{SST*} & \multicolumn{4}{c||}{S2M2} & \multicolumn{4}{c||}{K-CBS} & \multicolumn{4}{c}{db-CBS}\\&  & $p$ & $t [s]$ & $J [s]$ & $r [\%]$ & $p$ & $t [s]$ & $J [s]$ & $r [\%]$ & $p$ & $t [s]$ & $J [s]$ & $r [\%]$ & $p$ & $t [s]$ & $J [s]$ & $r [\%]$\\\hline\hline
1 & swap & 0.3 & \bfseries 0.1 & 29.8 & 49 & \bfseries 1.0 & \bfseries 0.1 & 16.0 & 5 & \bfseries 1.0 & 21.5 & 30.4 & 50 & \bfseries 1.0 & 3.6 & \bfseries 15.2 & \bfseries 0\\\hline
2 & alcove & 0.9 & 4.2 & 45.5 & 50 & \bfseries 1.0 & \bfseries 0.5 & 29.2 & 22 & 0.0 & \textemdash & \textemdash & \textemdash & \bfseries 1.0 & 44.0 & \bfseries 22.8 & \bfseries 0\\\hline
3 & at goal & 0.5 & \bfseries 0.5 & 33.8 & 47 & 0.0 & \textemdash & \textemdash & \textemdash & 0.0 & \textemdash & \textemdash & \textemdash & \bfseries 1.0 & 58.9 & \bfseries 17.6 & \bfseries 0\\\hline
\hline
4 & rand (N=2) & 0.1 & 0.3 & \bfseries 18.7 & 58 & 0.7 & \bfseries 0.1 & 27.5 & 8 & \bfseries 1.0 & 2.0 & 65.0 & 63 & \bfseries 1.0 & 6.0 & 25.6 & \bfseries 0\\\hline
5 & rand (N=4) & 0.0 & \textemdash & \textemdash & \textemdash & 0.3 & \bfseries 0.4 & \bfseries 52.0 & 13 & 0.6 & 61.9 & 144.4 & 59 & \bfseries 1.0 & 28.7 & 55.1 & \bfseries 0\\\hline
6 & rand (N=8) & 0.0 & \textemdash & \textemdash & \textemdash & 0.0 & \textemdash & \textemdash & \textemdash & 0.0 & \textemdash & \textemdash & \textemdash & \bfseries 0.1 & \bfseries 256.6 & \bfseries 146.9 & \bfseries 0\\\hline
\hline
7 & rand hetero (N=2) & 0.1 & 157.2 & 33.6 & 29 & $\star$ & $\star$ & $\star$ & $\star$ & 0.5 & 17.2 & 50.9 & 62 & \bfseries 1.0 & \bfseries 2.2 & \bfseries 14.2 & \bfseries 0\\\hline
8 & rand hetero (N=4) & 0.0 & \textemdash & \textemdash & \textemdash & $\star$ & $\star$ & $\star$ & $\star$ & 0.2 & 145.6 & 89.9 & 65 & \bfseries 0.8 & \bfseries 7.5 & \bfseries 37.1 & \bfseries 0\\\hline
9 & rand hetero (N=8) & 0.0 & \textemdash & \textemdash & \textemdash & $\star$ & $\star$ & $\star$ & $\star$ & 0.0 & \textemdash & \textemdash & \textemdash & \bfseries 0.3 & \bfseries 271.8 & \bfseries 86.4 & \bfseries 0\\
\end{tabular}
\label{tab:unified}
\end{table*}    

\begin{table}
\caption{Runtime of SST* ($\star$), K-CBS ($\dagger$), and db-CBS($\ddagger$) in seconds on the swap problem (median over 10 trials).}
\centering
\setlength{\tabcolsep}{3pt}
\begin{tabular}{c  || r|r|r || r|r|r || r|r|r || r|r|r}
N  & \multicolumn{3}{c||}{unicycle $1^{\mathrm{st}}$} & \multicolumn{3}{c||}{double int.} & \multicolumn{3}{c||}{car with trailer} & \multicolumn{3}{c}{unicycle $2^{\mathrm{nd}}$}\\& $\star$& $\dagger$& $\ddagger$& $\star$& $\dagger$& $\ddagger$& $\star$& $\dagger$& $\ddagger$& $\star$& $\dagger$& $\ddagger$\\\hline\hline1 & \bfseries 0.2 & 1.0 & 1.2 & \bfseries 0.0 & 1.0 & 0.2 & \bfseries 0.3 & 1.0 & 2.5 & \bfseries 1.5 & 14.6 & 7.0\\\hline2 & 2.4 & \bfseries 2.0 & 2.1 & \textemdash & 2.0 & \bfseries 0.4 & \textemdash & 10.1 & \bfseries 5.3 & \textemdash & 39.3 & \bfseries 11.3\\\hline3 & \textemdash & 34.6 & \bfseries 4.1 & \textemdash & \textemdash & \bfseries 0.5 & \textemdash & \textemdash & \bfseries 11.9 & \textemdash & \textemdash & \bfseries 20.9\\\hline4 & \textemdash & 214.1 & \bfseries 8.0 & \textemdash & \textemdash & \bfseries 1.1 & \textemdash & \textemdash & \bfseries 86.0 & \textemdash & \textemdash & \bfseries 23.5\\
\end{tabular}
\label{tab:swap}
\end{table}

\subsubsection{Alcove}\label{res:alcove} the environment requires one robot to move into an alcove temporarily in order to let the second robot pass. All baselines and db-CBS are able to solve this problem, except K-CBS. 

\subsubsection{At Goal} similar to \textit{Alcove}, but one robot is already at its goal state and needs to move away temporarily to let the second robot pass. Just SST* and db-CBS are able to find a solution, but with high difference between their solution cost. The failure of S2M2 is due to collision violation.

Both \emph{alcove} and \emph{at goal} should be solvable by K-CBS for finite merge bounds (a feature that was not readily available in the provided code). However, in this case the algorithm essentially becomes joint space SST* and would produce solutions with a high cost as seen in \cref{tab:unified}.

\subsection{Scalability} 

\subsubsection{Robot Number} we generate obstacles, start and goal states for up to 8 robots, see rows 4--6 in \cref{tab:unified}. S2M2 and K-CBS are able to solve this problem with up to 4 robots, however inconsistently. In addition both of them are failing to return a solution for 8 robot cases, while db-CBS scales up successfully.

\subsubsection{Heterogeneous Systems} we generate obstacles, start and goal states for a team of up to 8 heterogeneous robots, see rows 7--9 in \cref{tab:unified}.
S2M2 is not tested, since it does not support all robot dynamics considered here. 
Even though SST* solves instances with 2 robots with significantly better cost compared to K-CBS, it struggles to scale to more robots. K-CBS is able to handle up-to 4 heterogeneous robots, but the solution cost is very high. The success rate and solution cost are the best for db-CBS across all settings.

Note that in these random instances the shown median of $J$ is not directly comparable unless the success rate $p=1$.
In the table the positive regret $r$ for all baselines shows that db-CBS consistently computes lower-cost solutions.

\subsubsection{State Dimension}\label{res:dimension} in \cref{tab:swap} we report the solution quality of \textit{swap} problem instances with different dynamics and varying robot numbers (1--4), while keeping the start and goal states unchanged. None of the methods, except db-CBS, succeed to solve the problem with more than two robots across different dynamics. However, K-CBS is able to solve up to 4 unicycles ($1^{st}$ order), while the success rate of SST* is very low if the robot number is greater than 1.

\subsection{Real-Robot Demo}
The real-world experiments are conducted inside a $3.5 \times 3.5 \times 2.75~\si{m^3}$ room. We use Bitcraze Crazyflie 2.1 drones and control them using Crazyswarm2. 
We consider two scenarios with 4 robots modeled with 2D double integrator dynamics (and thus do not include S2M2 in the discussion). 
We first test the \textit{swap} example.
K-CBS and SST* can not solve this problem similar to results observed in \cref{res:dimension}, while db-CBS takes $\SI{12.1}{s}$ to return a trajectory with $\SI{11.4}{s}$ cost.
The second scenario is an environment including a wall with a small window and 2 robots on each side of the wall. The goal states require the robots to pass through the small window, see \cref{fig:main}. Only db-CBS succeeds finding a solution within $\SI{80}{s}$ with $\SI{21.1}{s}$ cost.

\section{Conclusion}

In this paper, we present db-CBS, an efficient, probabilistically complete and asymptotically optimal motion planner for a heterogeneous team of robots that considers robot dynamics and control bounds. Db-CBS solves the multi-robot kinodynamic motion planning problem by finding collision-free trajectories with a bounded discontinuity and optimizes them in joint configuration space. Empirically, we show that db-CBS finds solutions with a significantly higher success rate and better solution cost compared to the existing state-of-the-art. Finally, we validate our planner on two real-world challenging problem instances on flying robots.

In the future, we are interested in solving problems with more robots and over longer time horizons, for example by combining db-A* with a more advanced version of CBS, such as Enhanced CBS (ECBS), and by repairing discontinuities locally.

\balance
\printbibliography

\end{document}